\DeclareMathOperator*{\argmax}{arg\,max}
\DeclareMathOperator*{\argmin}{arg\,min}
  \theoremstyle{plain}
  \newtheorem{assumption}{Assumption}
  \newtheorem{theorem}{Theorem}
  \newtheorem{lemma}{Lemma}
\begin{document}

\twocolumn[
\icmltitle{Robust Reinforcement Learning with Wasserstein Constraint}



\icmlsetsymbol{equal}{*}

\begin{icmlauthorlist}
\icmlauthor{Linfang Hou}{jd}
\icmlauthor{Liang Pang}{ict}
\icmlauthor{Xin Hong}{ict}
\icmlauthor{Yanyan Lan}{ict}
\icmlauthor{Zhiming Ma}{amss}
\icmlauthor{Dawei Yin}{bd}
\end{icmlauthorlist}

\icmlaffiliation{jd}{JD.com, Beijing, China}
\icmlaffiliation{amss}{Academy of Mathematics and Systems Science, Chinese Academy of Sciences, Beijing, China}
\icmlaffiliation{ict}{Institute of Computing Technology, Chinese Academy of Sciences, Beijing, China}
\icmlaffiliation{bd}{Baidu Inc., Beijing, China}

\icmlcorrespondingauthor{Linfang Hou}{houlinfang@jd.com}


\vskip 0.3in
]



\printAffiliationsAndNotice{}  

\begin{abstract}
Robust Reinforcement Learning aims to find the optimal policy with some extent of robustness to environmental dynamics. Existing learning algorithms usually enable the robustness through disturbing the current state or simulating environmental parameters in a heuristic way, which lack quantified robustness to the system dynamics (i.e. transition probability). To overcome this issue, we leverage Wasserstein distance to measure the disturbance to the reference transition kernel.
 With Wasserstein distance, we are able to connect transition kernel disturbance to the state disturbance, i.e. reduce an infinite-dimensional optimization problem to a finite-dimensional risk-aware problem. 
Through the derived risk-aware optimal Bellman equation, we show the existence of optimal robust policies, provide a sensitivity analysis for the perturbations, and then design a novel robust learning algorithm---{\bf W}asserstein {\bf R}obust {\bf A}dvantage {\bf A}ctor-{\bf C}ritic algorithm (WRAAC). 
The effectiveness of the  proposed algorithm is verified in the Cart-Pole environment.
\end{abstract}

\section{Introduction}\label{intro}
Robustness to environmental dynamics is an important topic in safe reinforcement learning, which is crucial for achieving the usability of RL in real world rather than games. For example, in autonomous driving, the driving agents have to adapt itself to complex real-world scenarios, which usually can not be fully covered during training. Typically, the driving agents are built with a simulated environment before deployments. However, due to the gap between simulated environment and real world, the trained agents over simulations are inevitable sub-optimal in real world scenarios~\citep{mannor2004bias,mannor2007bias} or even face failure. Therefore, learning policies robust to environmental dynamics is a challenging and urgent problem for safe reinforcement learning. 


For robust Reinforcement Learning algorithms, existing methods lie on two branches: One type of methods, borrowed from game theory, introduces an extra agent to disturb the simulated environmental parameters during training~\citep{atkeson2003nonparametric,morimoto2005robust,pinto2017robust,rajeswaran2016epopt}. This method has to rely on the environmental characterization.
The other type of methods disturbs the current state through Adversarial Examples~\citep{huang2017adversarial,kos2017delving,lin2017tactics,mandlekar2017adversarially,pattanaik2018robust}, which is more heuristic.
Unfortunately, both methods are lack of theoretical guarantee to the robustness extent of transition dynamics. 

To address these issues, we design a Wasserstern constraint, which restricts the admissible transition probabilities within a Wasserstein ball centered at some reference transition dynamics. 
By applying the strong duality of Wasserstein distance~\citep{santambrogio2015optimal,blanchet2019quantifying}, 
we are able to connect the disturbance on transition dynamics with the disturbance on the current state.
As a result, the original infinite-dimensional robust optimal problem is reduced to some finite-dimensional ordinary risk-aware RL problem.
Through the moderated optimal Bellman equation, we prove the existence of robust optimal policies, provide the theoretical analyse on the performance of optimal policies, and design a corresponding ---{\bf W}asserstein {\bf R}obust {\bf A}dvantage {\bf A}ctor-{\bf C}ritic algorithm (WRAAC), which does not depend on the environmental characterization. 
In the experiments, we verified the robustness and effectiveness of the proposed algorithms in the Cart-Pole environment. 



The remainder of this paper is organized as follows.
In Section~\ref{relatedwork}, we briefly introduce some related work in Markov Decision Processes.
In Section~\ref{wass_robust}, we mainly describe the framework of Wasserstein robust Reinforcement Learning.
In Section~\ref{alg}, we propose robust Advantage Actor-Critic algorithms according to the moderated robust Bellman equation.
In Section~\ref{exp}, we perform experiments on the Cart-Pole environment to verify the effectiveness of our method.
Finally, Section~\ref{conclu} concludes our study and provide possible future works.

\section{Related Work}\label{relatedwork}
In this section, we introduce some related work in the fields of MDPs.
In robust MDP, the set of all possible transition kernels is called uncertainty set, which can be defined in various ways: one choice could be likelihood regions or entropy bounds of the environment parameters ~\citep{white1994markov,nilim2005robust,iyengar2005robust,wiesemann2013robust}; another choice is to constrain the deviation from a reference environment through some statistical distance. For example, \citet{osogami2012robustness} discussed such robust problem where the uncertainty set are defined via Kullback-Leibler divergence, and also uncover the relations between robust MDPs using $f$-divergence constraint and risk-aware MDPs.

Indeed, it was observed that since the robust MDP framework ignores probabilistic information of the uncertainty set, it can provide conservative solutions~\citep{delage2010percentile,xu2007robustness}.
Some papers consider bringing prior knowledge of dynamics to robust MDPs, and name such problem distributionally robust MDPs.
\citet{xu2010distributionally} discuss robust MDPs with prior information to estimate the confidence region of parameters abound, which is a moment-based constraint, and they also show that such distributionally robust problems can be reduced to standard robust MDP problems. \citet{yang2017convex,yang2018wasserstein} use Wasserstein distance to evaluate the difference among the prior distributions of transition probabilities. 
However, \citeauthor{yang2017convex}'s algorithms are not appropriate for complex situations, because they need to estimate enough transition kernels to approximate prior distribution at each step.

\section{Wasserstein robust reinforcement learning}\label{wass_robust}

In this section, we specify the problem of interest, which is actually a minimax problem constrained by some Wassserstein-based uncertainty set. We start with introducing a general theoretical framework, i.e., robust Markov Decision Process, and then briefly recall the definition of Wasserstein distance between probability measures. Inspired by the strong duality brought by Wasserstein-based uncertainty set, the robust MDP is reformulated  to some risk-aware MDP, making connections clear between robustness to dynamics and robustness to states. 

\subsection{Robust Markov Decision Process}

Unlike ordinary Markov Decision Processes (MDPs), in robust MDP, environmental dynamics, including transition probabilities and rewards,  might change over time~\citep{nilim2004robustness,nilim2005robust}. Theoretically, such dynamics can be treated as stochastic changes within an uncertainty set. The objective of robust MDP is to find the optimal policy under the worst dynamics. 

Given discrete-time robust MDPs with continuous state and action spaces, without loss of generalization, we only consider the robustness to transition probabilities. Basic elements of robust MDPs include $(\mathcal{X}, \mathcal{A}, \mathcal{Q}, c)$, where 

\begin{itemize}
 \item $\mathcal{X}$: state space, which is a Borel measurable metric space. 
 \item $\mathcal{A}$: action space, which is a Borel measurable space. Let $A(x) \in \mathcal{A}$ represent all the admissible actions at state $x \in X $, and $\mathbb{K}_A$ denote all the possible state-action pairs, i.e., $\mathbb{K}_A = \{  (x,a) : x \in \mathcal{X},~a \in A(x) \}$.
 \item $\mathcal{Q}$: the uncertainty set that contains all possible transition kernels. 
 \item $c$: $\mathbb{K}_A \rightarrow \mathbb{R}$, the immediate cost function. Generally we assume it is continuous and $c \in [0, \bar{c}]$ for some non-negative constant $\bar{c}$.
\end{itemize}

The robust system evolves in the following way. Let $n \in \mathbb{N}$ denote the current time and $x_n \in \mathcal{X}$ the current state.
Agent chooses an action  $a_n \in A(x_n)$ and environment selects a transition kernel ${q}_n$ from the uncertainty set $\mathcal{Q}$ , respectively. 
Then at the next time $n+1$, an agent observes an immediate cost $c(x_n,a_n)$ and a new state $x_{n+1} \in \mathcal{X}$ which follows the distribution ${q}_n(\cdot |x_n, a_n)$. The process repeats at each stage and produces trajectories in a form of
$\omega =(x_0, a_0, q_0, c_0,  x_1, a_1, q_1, c_1, ...)$. Let $\Omega = (\mathcal{X} \times \mathcal{A} \times \mathcal{Q} \times [0,\bar{c}])^\infty$ denote all the trajectories.
Let $\Omega_n = \{\omega_n=(x_0, a_0, q_0, c_0,  x_1, a_1, q_1, c_1, ...,x_n) \}$ denote all trajectories up to time $n$ and  $\tilde{\Omega}_n = \{ \tilde{\omega}_n=(x_0, a_0, q_0, c_0,  x_1, a_1, q_1, c_1, ...,x_n, a_n) \}$ denote all trajectories up to time $n$ with action $a_n$.

Correspondingly, a randomized policy is a series of stochastic kernels:  $\pi=(\pi_0,\pi_1,\pi_2,...)$ where $\pi_n(\cdot|\omega_n)$ is a probability measure over ${A}(x_n)$. We name $\pi$ {\bf primal policy} and use $\Pi$ to represent all such randomized policies. If $\pi_n(\cdot|\omega_n) = \pi_n(\cdot|x_n)$ for $n \geq 0$, we say the policy is Markov. If $\pi_n \equiv \pi_0$ for any $n \geq 0$, this policy is stationary. If there exists measurable functions $f_n: ~\Omega_n \rightarrow \mathcal{A}$ such that $\pi_n(f_n(\omega_n) | \omega_n) \equiv 1,~ n \geq 0$, this policy is called deterministic. We denote the set of all such deterministic, stationary, Markov policies by $\mathbb{F}$.

The selection of transition kernels can be treated as a deterministic policy deployed by a secondary adversarial agent. Let $g=(g_0, g_1, g_2, ...)$ with $g_n: ~\tilde{\Omega}_n \rightarrow \mathcal{Q}$ denote the {\bf adversarial policy}. 
We use $\mathbb{G}$ to represent all such deterministic policies. Similarly, if $g_n(\cdot|\tilde{\omega}_n) = g_n(\cdot|x_n, a_n)$ for all $n \geq 0$, the policy is Markov, and if $g_n \equiv g_0$ for any $n \geq 0$, the policy is stationary.

 Given the initial state $X_0=x\in \mathcal{X}$, primal policy $\pi \in \Pi$ and adversarial policy $g \in \mathbb{G}$, applying the Ionescu-Tulcea theorem~\citep{hernandez2012discrete,bertsekas2004stochastic}, there exist a probability measure $\mathbb{P}_x^{\pi,g}$ on trajectory space $(\Omega, \mathcal{F})$, where $\mathcal{F}$ is the $\sigma$-algebra of $\Omega$, satisfies
\begin{itemize}
  \item $\mathbb{P}_x^{\pi,g}(X_0 = x) = 1$,
  \item $\mathbb{P}_x^{\pi,g}(da | \omega_n) = \pi_n( da | \omega_n)$,
  \item $\mathbb{P}_x^{\pi,g}( dq | \tilde{\omega}_n) = \mathbbm{1}(g_n(\tilde{\omega}_n)\in dq )$,
  \item $\mathbb{P}_x^{\pi,g}(X_{n+1} {\in} dx | \omega_n, a_n, q_n) {=} q_n(X_{n+1} {\in} dx  | \omega_n, a_n)$.
\end{itemize}

Let $\mathbb{E}_x^{\pi,g}$ denote the corresponding expectation operation.
As for the performance criterion, we consider the infinite-horizon discounted cost. Let $\gamma \in (0,1)$ be the discounting factor. The discounted cost contributed by trajectory $\omega \in \Omega$ is 
$C_{\gamma}(\omega) =  \Sigma_{n=0}^{\infty} {\gamma}^n c(x_n,a_n).$
Given the initial state $x_0=x$, policies $\pi$ and $g$, the expected infinite-horizon discounted cost is 
\begin{equation}\label{eq:robust-total-cost}
C_{\gamma}^{\pi, g}(x):= \mathbb{E}_x^{\pi,g}[\Sigma_{n=0}^{\infty} {\gamma}^n c(x_n,a_n) ] .
\end{equation}

Robust MDPs aim to find the optimal policy $\pi^*$ for the agent under the worst realization of $g \in \mathbb{G}$, which means that $\pi^*$ reaches 
\begin{equation}\label{eq:objective}
\inf_{\pi} \sup_{g} C_{\gamma}^{\pi, g}(x).
\end{equation}
This minimax problem can be seen as a zero-sum game of two agents. 

\subsection{Wasserstein Distance}

The popular Wasserstein distance is a special case of optimal transport costs, which measures the discrepancy between two probabilities in terms of minimum total costs associated with some transport function.
For any two probability measures $Q$ and $P$ over the measurable space $(\mathcal{X},\mathcal{B}(\mathcal{X}))$, let $\Xi (Q,P)$ denote the set of all joint distributions on $\mathcal{X} \times \mathcal{X}$ with $Q$ and $P$ are respective marginals. Each element in $\Xi (Q,P)$ is called a coupling between $Q$ and $P$.
Let $\kappa: \mathcal{X} \times \mathcal{X} \rightarrow [0,\infty)$ be the transport cost function between two positions, which is non-negative, lower semi-continuous and satisfy $\kappa(z,y)=0$ if and only if $z=y$. Intuitively, the quantity $\kappa(z,y)$ specifies the cost of transporting unit mass from $z$ in $\mathcal{X}$ to another element $y$ of $\mathcal{X}$. 
Then the optimal transport total cost associated with $\kappa$ is defined as follows:
$$ D_{\kappa} (Q,P) := \inf _{\xi \in \Xi (Q,P)}\left\{\int_{\mathcal{X}\times\mathcal{X}} \kappa(z,y) d\xi (z, y) \right \}.$$
Therefore, the optimal transport cost $D_{\kappa} (Q,P)$ corresponds to the lowest transport cost that can be obtained among all couplings between $Q$ and $P$. 
Let the transport cost function $\kappa$ be some distance metric $d$ on $\mathcal{X}$, and then it is actually the Wasserstein distance of first order.
Wasserstein distance of order $p$ is defined as:
$$ W_p(Q,P) {:=} \inf _{\xi \in \Xi (Q,P)}\left\{\int_{\mathcal{X}\times\mathcal{X}} d(z,y)^p d\xi (z,y)\right \}^{\frac{1}{p}}, ~p {\geq} 1.$$ 

Unlike Kullback-Liebler divergence or other likelihood-based divergence measures, Wasserstein distance is a proper metric on the space of probabilities. More importantly, Wasserstein distance does not restrict probabilities to share the same support~\citep{villani2008optimal,santambrogio2015optimal}.
Let $d(z,y)=\parallel z-y \parallel_2$,  $\kappa(z, y) = \frac{1}{p} \parallel z-y \parallel_2^p$ and $\delta = \frac{1}{p} \epsilon^p$, the $\epsilon$-Wasserstein ball of order $p$ and the $\delta$-optimal-transport ball are identical:
$$ \{Q: W_p(Q,P) \leq \epsilon \}=\{Q: D_{\kappa} (Q,P) \leq \delta \} .$$

 Due to its superior statistical properties, Wasserstein-based uncertainty set has recently received a great deal of attention in DRSO problem~\citep{gao2016distributionally,esfahani2018data,blanchet2019quantifying}, adversarial example~\citep{sinha2017certifying}, and so on. We will apply it to robust RL.

\subsection{Optimal Solutions}

Let the uncertainty set $\mathcal{Q}$ be a $\epsilon$-Wasserstein ball of order $p$ centered at some reference transition kernel $P$: 
\begin{align}
\mathcal{Q} {=}& \{Q{:} W_p(Q(\cdot|x,a),P(\cdot|x,a)) {\leq} \epsilon, ~\forall (x,a){\in} \mathbb{K}_A \} ~\label{eq:radius-w} \\
=& \{Q{:} D_{\kappa} (Q(\cdot|x,a),P(\cdot|x,a)) {\leq} \delta, ~\forall (x,a){\in} \mathbb{K}_A \}. ~\label{eq:radius-o}
\end{align}

The radius $\epsilon$ or $\delta$ reflects the extent of adversarial perturbation to the reference transition kernel $P$. The difference between our theoretical framework and \citet{yang2017convex,yang2018wasserstein} is that our framework is trying to find the optimal solution for the worst transition kernel within the Wasserstein ball, while theirs is trying to find the optimal solution for the worst distribution over transition kernels.

Recall the state value function (\ref{eq:robust-total-cost}) at state $x_0$ given primal policy $\pi$ and adversarial policy $g$, we can rewrite the state value function as follows, 
\begin{align*}
&C_{\gamma}^{\pi, g}(x_0) \\
=& \mathbb{E}_{x_0}^{\pi,g}[\Sigma_{n=0}^{\infty} {\gamma}^n c(x_n,a_n) ] \\
=& \mathbb{E}_{x_0}^{a_0 \sim \pi, q_0 } [c(x_0,a_0)+ \mathbb{E}_{x_1 \sim q_0(\cdot | x_0, a_0)}^{^{(1)} \pi,  ^{(1)}  g} [  \Sigma_{n=1}^{\infty} {\gamma}^n c(x_n,a_n)] ]\\
=& \mathbb{E}_{x_0}^{a_0 {\sim} \pi, q_0 } [  c(x_0,a_0){+} \gamma\int_{ \mathcal{X}}q_0(d x_1|x_0,a_0) C_{\gamma}^{^{(1)}  \pi, ^{(1)}  g}(x_1)],
\end{align*}
where $^{(1)}\pi=(\pi_1,\pi_2,...)$ and $^{(1)} g = (g_1,g_2,...)$ are the shift policies. Since $c$ is continuous and bounded, the value function is actually continuous in $\mathcal{X}$ and belongs to $[0, \frac{\bar{c}}{1-\gamma}]$.
Let $u: \mathcal{X} \rightarrow \mathbb{R}$ be a measurable, upper semi-continuous function with $u \in [0, \frac{\bar{c}}{1-\gamma}]$, and let $\mathbb{U}$ denote the set of all such functions. 

For state $x\in \mathcal{X}$ and action $a \in A(x)$. Consider the following operator $H^a$ defined on $\mathbb{U}$:
\begin{equation}\label{eq:Ha}
    (H^a u) (x)  := c(x,a)+ \sup_{Q \in \mathcal{Q} } \gamma  \int_{y \in \mathcal{X}} Q(dy|x,a) u(y) .
\end{equation}

Applying Lagrangian method and the strong duality property brought by Wasserstein distance~\citep{blanchet2019quantifying}, we reformulate (\ref{eq:Ha}) to the following form:

\begin{equation}\label{eq:H-a}
\begin{split}
	(H^a u) (x) &= 
    \inf \limits_{\lambda \geq 0}  c(x,a) + \gamma  \lambda \delta + \\
     &\gamma \int_{y \in \mathcal{X}} P(dy|x,a) [ \sup \limits_{z \in \mathcal{X}} (u(z) {-} \lambda \kappa(z,y) )  ]  .
\end{split}
\end{equation}

The significance of this strong dual representation lies on the fact that the operator $\sup_{\mathcal{Q}}$ in eq.~(\ref{eq:Ha}) is replaced by $\sup_{z\in \mathcal{X}}$ in eq.~(\ref{eq:H-a}), which leads a much easier optimization algorithm. 
The right-hand side of eq.~(\ref{eq:H-a}) is a normal  iterated-risk function.
That is, it reduces the infinite-dimensional probability-searching problem~(\ref{eq:Ha}) into an ordinary finite-dimensional optimization problem~(\ref{eq:H-a}).

It is easy to verify that $H^a$ maps $\mathbb{U}$ to $\mathbb{U}$.
Thus, given a state $x\in \mathcal{X}$ and agent policy $\pi$, we have the following expected Bellman-form operator:
\begin{align*}
(H^{\pi} u) (x)  
:=& \int_{a\in A(x)} \pi(da |x) H^a u(x) \\
=& \inf \limits_{\lambda \geq 0} \gamma  \lambda \delta +  \int_{a\in A(x)} \pi(da |x) [ c(x,a)+\\
& \gamma  \int_{\mathcal{X}} P(dy|x,a)  [ \sup \limits_{z \in \mathcal{X}} (u(z) - \lambda \kappa(z,y) ) ] ].  
\end{align*}

Similarly, $H^{\pi}$ maps $\mathbb{U}$ to $\mathbb{U}$ as well. Under the following Assumption~\ref{as-robust}, we are able to define the optimal iteration operator and show its contraction property.

\begin{assumption}\label{as-robust}
 $\mathcal{X}$ is a compact metric space. For any $x \in \mathcal{X}$, $A(x)$ is compact and $H^a$ is lower semi-continuous on $a\in A(x)$.
\end{assumption}

Then, given an initial state $x\in \mathcal{X}$, the following optimal operator over $\mathbb{U}$ is well-defined. 
\begin{align}
(H u) (x) 
{:=}& \inf \limits_{a \in A(x)} H^a u(x) \\
{=}& \inf_{\substack{a \in A(x),\\ \lambda \geq 0}}  c(x,a)+ \gamma  \lambda \delta + \nonumber  \\
&\gamma  \int_{\mathcal{X}} P(dy|x,a)  [ \sup \limits_{z \in \mathcal{X}} (u(z) - \lambda \kappa(z,y) )  ]. ~\label{eq:bellman-op}
\end{align}

It is simple to verify that $H$ maps $\mathbb{U}$ to $\mathbb{U}$. The contraction property of $H$ is shown in Lemma~\ref{lem:robust-contraction}. 

\begin{lemma}\label{lem:robust-contraction}
$H$ is a contraction operator in $\mathbb{U}$ under $L_{\infty}$ norm. There exists an unique element in $\mathbb{U}$, denoted as $u^*$, satisfying $H u^* = u^*$.
\end{lemma}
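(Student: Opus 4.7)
The plan is to invoke the Banach fixed-point theorem, so the two things to establish are (i) $(\mathbb{U}, \|\cdot\|_\infty)$ is a complete metric space on which $H$ acts, and (ii) $H$ contracts with modulus $\gamma$. Since the paper has already remarked that $H$ maps $\mathbb{U}$ into itself under Assumption~\ref{as-robust}, the focus of the proof will be completeness and the contraction estimate.

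First I would argue that $\mathbb{U}$ is closed in $(B(\mathcal{X}), \|\cdot\|_\infty)$, the Banach space of bounded measurable functions on $\mathcal{X}$. The three defining properties of $\mathbb{U}$ are all preserved by uniform limits: the pointwise range condition $u \in [0,\bar c/(1-\gamma)]$ is obviously closed; measurability is preserved; and upper semi-continuity survives uniform convergence, since if $u_n \to u$ uniformly with each $u_n$ USC then for every $x$ and every $\varepsilon>0$ we have $\limsup_{y\to x} u(y) \le \limsup_{y\to x}u_n(y) + \varepsilon \le u_n(x)+\varepsilon \le u(x)+2\varepsilon$. Hence $\mathbb{U}$ inherits completeness.

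The main step is the $\gamma$-contraction estimate. For fixed $x \in \mathcal{X}$, write the integrand in \eqref{eq:bellman-op} as
\begin{equation*}
F_u(x,a,\lambda) := c(x,a) + \gamma\lambda\delta + \gamma \int_{\mathcal{X}} P(dy|x,a)\,\sup_{z\in\mathcal{X}}\bigl(u(z) - \lambda\kappa(z,y)\bigr),
\end{equation*}
so that $(Hu)(x) = \inf_{a,\lambda} F_u(x,a,\lambda)$. Using the elementary inequality $|\inf_i \alpha_i - \inf_i \beta_i| \le \sup_i |\alpha_i - \beta_i|$ (and the analogous bound for $\sup$), I would bound
\begin{equation*}
\bigl|(Hu)(x) - (Hv)(x)\bigr| \le \sup_{a,\lambda}\bigl|F_u(x,a,\lambda) - F_v(x,a,\lambda)\bigr|.
\end{equation*}
The $c(x,a)$ and $\gamma\lambda\delta$ terms cancel, and inside the integral the sup-difference is controlled by $\|u-v\|_\infty$ uniformly in $y$ and $\lambda$. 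Pulling the constant out of $P(dy|x,a)$ then yields $|(Hu)(x) - (Hv)(x)| \le \gamma \|u-v\|_\infty$; taking the supremum over $x$ gives $\|Hu - Hv\|_\infty \le \gamma \|u-v\|_\infty$.

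With $\gamma \in (0,1)$ this is a strict contraction on the complete space $\mathbb{U}$, and Banach's fixed point theorem delivers the unique $u^* \in \mathbb{U}$ with $Hu^* = u^*$. I don't foresee real obstacles: the only place one has to be slightly careful is confirming that the $\sup_z$ term does not destroy the sup-norm bound, and this is handled by the uniform-in-$(\lambda,y)$ nature of the inequality $|\sup_z(u(z)-\lambda\kappa(z,y)) - \sup_z(v(z)-\lambda\kappa(z,y))| \le \|u-v\|_\infty$. Upper semi-continuity of $Hu$ (needed to close the argument $H:\mathbb{U}\to\mathbb{U}$) is exactly what Assumption~\ref{as-robust} together with a standard measurable-selection / Berge-type argument on the compact set $A(x)$ provides, which is presumably the step the authors call ``simple to verify.''
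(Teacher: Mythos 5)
Your proof is correct, but it reaches the contraction estimate by a different route than the paper. The paper verifies the two Blackwell-type sufficient conditions: monotonicity of $H$ and the additive-constant property $H(u+C) = Hu + \gamma C$, and then sandwiches $u_1 \le u_2 + \|u_1-u_2\|_\infty$ to conclude $\|Hu_1 - Hu_2\|_\infty \le \gamma\|u_1-u_2\|_\infty$. You instead prove the Lipschitz bound directly, passing the difference through the nested $\inf_{a,\lambda}$ and $\sup_z$ via the elementary inequality $|\inf_i\alpha_i - \inf_i\beta_i| \le \sup_i|\alpha_i-\beta_i|$ and its $\sup$ analogue, with the $c(x,a)$ and $\gamma\lambda\delta$ terms cancelling and the factor $\gamma$ emerging from the discounted integral. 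Both arguments are sound and of comparable length; the paper's is slightly slicker in that it never has to open up the nested optimization, while yours makes explicit exactly where the modulus $\gamma$ comes from and is robust to the unboundedness of the $\lambda$-index set. You also supply a step the paper silently assumes: completeness of $(\mathbb{U},\|\cdot\|_\infty)$, via closedness of the range constraint, measurability, and upper semi-continuity under uniform limits. That addition is welcome, since Banach's theorem requires it and the paper invokes the theorem without comment.
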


\begin{proof}
(1) First, for $\{u_1, u_2\} \subset \mathbb{U}$, if $u_1 \geq u_2$, it's easy to have $H u_1 \geq H u_2$, i.e., the operator $H$ is monotone about $u$.\\
(2) For any real constant $C$ and $u \in \mathbb{U}$, we can verify that $H(u+C) = H u +\gamma C$.\\
(3) For any $u_1 \in \mathbb{U}$, $u_2 \in \mathbb{U}$, there is $u_1 \leq u_2 + ||u_1 -u_2||_{\infty}$. Combining (1) and (2), we have $H u_1 \leq H u_2 + \gamma ||u_1 - u_2||_{\infty}$, i.e., $H u_1 - H u_2 \leq \gamma ||u_1 -u_2||_{\infty}$. Thus $||H u_1 - H u_2||_{\infty} \leq \gamma ||u_1 -u_2||_{\infty}$. Furthermore, since $\gamma \in (0,1)$, the operator $H$ has the contract property in $\mathbb{U}$ under $L_{\infty}$ norm.\\
4) Via Banach fixed-point theorem, there exist an unique $u^* \in \mathbb{U}$ satisfying $H u^* = u^*$.
\end{proof}

For any  $u_0 \in \mathbb{U}$, $u_n := H u_{n-1} = H^n u_0$. Due to the contraction, we have  
\begin{equation}\label{eq:iter}
    \lim \limits_{n \rightarrow \infty} u_n =\lim \limits_{n \rightarrow \infty} H^n u_0 = u^*,
\end{equation}
which indicates an iterative procedure of finding the optimal value function.
Based on this optimal value function, we can demonstrate the existence of optimal policies, and single out an optimal policy who is deterministic, Markov and stationary, as shown in Theorem~\ref{thm:robust}. 

\begin{theorem}\label{thm:robust}
There exists a deterministic Markov stationary policy $f \in \mathbb{F}$ that satisfies $$H^f u^* = H u^* =u^*.$$
\end{theorem}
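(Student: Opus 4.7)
The plan is to construct $f \in \mathbb{F}$ as a measurable selector of the pointwise minimizers of $a \mapsto H^a u^*(x)$ and then verify it achieves both equalities. Since a deterministic Markov stationary policy is determined by a single measurable map $f: \mathcal{X} \to \mathcal{A}$ with $f(x) \in A(x)$, and for such a policy $H^f u^*(x) = H^{f(x)} u^*(x)$ by the definition of $H^\pi$, while $H u^* = u^*$ is already granted by Lemma~\ref{lem:robust-contraction}, the task reduces to producing a measurable $f$ satisfying $H^{f(x)} u^*(x) = \inf_{a \in A(x)} H^a u^*(x)$ for every $x \in \mathcal{X}$.

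First I would establish that the pointwise argmin set is nonempty. Assumption~\ref{as-robust} gives compactness of $A(x)$ and lower semi-continuity of $a \mapsto H^a u^*(x)$ on $A(x)$, so the generalized Weierstrass theorem guarantees the infimum is attained, and the set-valued map $\Phi(x) := \{a \in A(x) : H^a u^*(x) = H u^*(x)\}$ has nonempty, compact values. Next I would promote pointwise existence to a measurable selection by invoking a standard selection theorem such as Kuratowski--Ryll-Nardzewski, or the MDP-tailored version in \citep{hernandez2012discrete} (the same source already used for the Ionescu--Tulcea construction). The ingredients required are measurability of the set-valued map $x \mapsto A(x)$ with nonempty compact values, which follows from Borel-measurability of $\mathbb{K}_A$; joint measurability in $x$ combined with lower semi-continuity in $a$ of $(x,a) \mapsto H^a u^*(x)$, inherited from continuity of $c$, upper semi-continuity of $u^*$, and the structural form of the dual representation~(\ref{eq:H-a}); and nonemptiness of $\Phi$ from the previous step. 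The theorem then yields a Borel-measurable $f : \mathcal{X} \to \mathcal{A}$ with $f(x) \in \Phi(x)$, i.e.\ $f \in \mathbb{F}$.

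Given such $f$, the conclusion is immediate: for every $x$, $H^f u^*(x) = H^{f(x)} u^*(x) = \inf_{a \in A(x)} H^a u^*(x) = H u^*(x) = u^*(x)$, the last equality being the fixed-point property from Lemma~\ref{lem:robust-contraction}. The main obstacle I anticipate is the second step, namely verifying enough joint regularity of $(x,a) \mapsto H^a u^*(x)$ to legitimately invoke the selection theorem: the nested $\inf_{\lambda \geq 0}$ and $\sup_{z \in \mathcal{X}}$ in~(\ref{eq:H-a}) can a priori destroy continuity in $(x,a)$, but compactness of $\mathcal{X}$ together with continuity of $c$ and of the transport cost $\kappa$ should be enough to show that the argmin multifunction has a measurable graph, which is all the selection theorem truly needs.
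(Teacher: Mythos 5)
Your proposal is correct and follows essentially the same route as the paper: both arguments combine the compactness of $A(x)$ and lower semi-continuity of $a \mapsto (H^a u^*)(x)$ from Assumption~\ref{as-robust} with a measurable selection theorem (the paper invokes Lemma 8.3.8 of Hern\'andez-Lerma and Lasserre) to extract a measurable minimizer $f$, and then conclude via the fixed-point identity $Hu^*=u^*$ from Lemma~\ref{lem:robust-contraction}. Your write-up is in fact more careful than the paper's one-line proof, particularly in flagging the joint regularity of $(x,a)\mapsto (H^a u^*)(x)$ needed to apply the selection theorem, which the paper leaves implicit.
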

\begin{proof}
Due to Assumption~\ref{as-robust}, for any $u\in \mathbb{U}$, it is a measurable function on $\mathbb{K}_A$, an $(H^a u)(x)$ is lower semi-continuous w.r.t. $a$. Based on the measurable selection theorem (see Lemma 8.3.8 in \citep{hernandez2012further}), there is a deterministic Markov stationary policy $f\in \mathbb{F}$, satisfying $H^f u^* = H u^* = u^*$.
\end{proof}

 We now obtain the existence of an unique robust optimal value function, as well as a robust optimal primal policies, which is deterministic, Markov and stationary.  
 Theorem above also indicates that the choice of an `optimal' adversarial policy actually can be fixed to some disturbed transition kernel during the process.
Through an iterative procedure as~(\ref{eq:iter}), we can design corresponding algorithms for robust Reinforcement Learning.

\subsection{Sensitivity Analysis and Penalty Term}
Before going to the algorithm design, we present a sensitivity analysis for the optimal value function w.r.t. the radius $\delta$ and the Wasserstein order $p$. Let $\lambda^*$ and $z^*(y,\lambda^*) = \argmax_{z \in \mathcal{X}} (u(z) - \lambda^* \kappa(z,y) )$ be a solution of equation~(\ref{eq:bellman-op}), and the penalty term $\lambda^*$ is non-negative. 

If $\lambda^*=0$, which means the worst transition kernel is within our fixed $\epsilon$-Wasserstein ball, equation (\ref{eq:bellman-op}) can be reduced to an ordinary problem: 
$$(Hu)(x) = \inf \limits_{a \in A(x)}  c(x,a) +  \gamma  \sup \limits_{z \in \mathcal{X}} u(z).$$
Thus $u^*$ has nothing to do with $\delta$ or $p$.

If $\lambda^*>0$, via the envelop theorem, the gradient of optimal value function w.r.t. $\delta$ can be calculated as follows.
\begin{equation}
    \dfrac{\partial u^*(x)}{\partial \delta} = \gamma  \lambda^* > 0.
\end{equation}

This gradient remains positive. That is, the optimal value function increases as the volume of Wasserstein ball increases (remember that $\delta= \frac{1}{p} \epsilon^p$ and the value function represents the discounted cost). 
Similarly, via the envelop theorem, the gradient w.r.t. $p$ can be calculated as follows.

\begin{equation}\label{eq:pgrad}
\begin{split}
	\dfrac{\partial u^*(x)}{\partial p} {=}& -\gamma \lambda^* \int_{\mathcal{X}} P(dy|x,a)   \dfrac{\parallel  z^*(y,\lambda^*){-}y \parallel_2^{p}}{p}  \\
	&~\cdot (\log \parallel  z^*(y,\lambda^*)-y \parallel_2-\dfrac{1}{p}).
\end{split}
\end{equation}


Since $\lambda^*>0$, the worst transition kernel $Q^*$ satisfies $W_p(Q^*,P)=\epsilon$, i.e. $D_\kappa(Q^*,P)=\delta$.\footnote{Derived from the fact that if $W_p(Q^*,P)<\epsilon$, there must be $\lambda^*=0$.}  Notice that calculating $z^*(y,\lambda^*)$ for $y$ is actually trying to find an optimal transport map $T_p: \mathcal{X} \rightarrow \mathcal{X}$, which substantially perturbs $P$ to $Q^*$. 
Recall that $u^*$ is upper semi-continuous and its domain is compact, and then we can actually regard $u^*$ as the Kantorovich potential~\citep{villani2008optimal} for a transport cost function $\lambda^*\kappa$ in the transport from $P$ to $Q^*$. For $p>1$, $\lambda^*\kappa$ is strictly convex. Through Theorem 1.17 in \citet{santambrogio2015optimal}, we can represent the optimal transport map in an explicit way, as well as the gradient over $p$ when $p>1$.
\begin{align}\label{eq:z}
	z^*(y,\lambda^*) {=}& T_p(y)\nonumber \\
	{=}& y{-}(\lambda^*)^{{-}\frac{1}{p{-}1}} {\parallel} \nabla_y u^*(y) {\parallel}^{{-}\frac{p{-}2}{p{-}1}} \nabla_y u^*(y).
\end{align}

\begin{align*}
	\frac{\partial u^*(x)}{\partial p} =& -\frac{\gamma \lambda^*}{p(p-1)} \int_{\mathcal{X}} P(dy|x,a)  {\parallel  \frac{\nabla_y u^*(y)}{\lambda^*} \parallel_2^{\frac{p}{p-1}}} \\
	&~\cdot (\log \parallel  \frac{\nabla_y u^*(y)}{\lambda^*} \parallel_2-\frac{p-1}{p}).
\end{align*}


If $\inf \limits_{y\in \mathcal{X}}\parallel \nabla_y u^*(y) \parallel_2 \geq e\lambda^*$, the gradient over $p$ is non-positive. 
Remember that $\lambda^*$ actually reflect the extent of robustness, i.e., smaller $\lambda^*$ coincides with larger radius $\epsilon$ and larger $\lambda^*$ coincides with smaller radius $\epsilon$. 
Intuitively, when the volume of Wasserstein ball is very large, larger $p$ is preferred.

If $\lambda^*$ is large enough and $\sup \limits_{y\in \mathcal{X}}\parallel \nabla_y u^*(y) \parallel_2 \leq \lambda^*$, the gradient over $p$ is non-negative. 
Intuitively, when the volume of Wasserstein ball is very small, the extent of perturbation at each point is small with high probability, making the gradient (\ref{eq:pgrad}) positive. Thus in such situation, smaller $p$ is preferred. 

Furthermore, the condition that $\sup \limits_{y\in \mathcal{X}}\parallel \nabla_y u^*(y) \parallel_2 \leq \lambda^*$ actually indicates smoothness of function $u^*(y)$. According to the key convex analysis in~\citet{sinha2017certifying}, when $p>1$, calculating $z^*$ is actually a strongly-concave optimization problem, which provides computational efficiency approach. 
If the extend of robustness in the experiment is completely unknown to us, we recommend choosing $\epsilon$ small enough, i.e., larger $\lambda^*$ according to the specific experimental scenarios.







\section{Wasserstein Robust Advantange Actor-Critic Algorithms}\label{alg}
In reinforcement learning, the agent does not know the precise environment dynamics, i.e., the transition kernel and immediate cost function are unknown. Some researchers leverage an adversarial agent to inject perturbations into environmental parameters during training procedures~\citep{pinto2017robust}. However, such methods have to work with pre-defined environmental parameters, and are lack of quantified robustness toward transition kernels. 
Other researchers borrow the idea of adversarial examples and disturb observed states in a heuristic way~\citep{nguyen2015deep}. They also lose the explanation of robustness towards system dynamics.

Following the analysis in Section~\ref{wass_robust}, we develop a robust Advantage Actor-Critic algorithm: a critic neural network with parameters $w$, denoted by $u_w$, is employed to estimate value function; and an actor neural network with parameters $\theta$, denoted by $\pi_{\theta}$, is designed as the primal policy. 
Rewrite equation~(\ref{eq:bellman-op}):
\begin{align*}
	(H u_w) (x) =&\inf \limits_{\theta, \lambda \geq 0}  \int_{a \in A(x )} \pi_{\theta}(da |x) [ c(x,a)+ \gamma  \lambda \delta +  \\
& \gamma \int_{\mathcal{X}} P(dy|x,a)  [ \sup \limits_{z \in \mathcal{X}} (u_w(z) - \lambda \kappa (z,y) )  ] ].
\end{align*}


\textbf{Update for $z$ and $\lambda$:}
Let $f_w(z; y,\lambda) := u_w(z) - \lambda \kappa (z,y)$ where $\kappa(z,y) = \frac{1}{p}\parallel z-y \parallel^p$,~$p \geq 1$. 
Given $y \in \mathcal{X}$ and $\lambda \in [0,\infty)$, denote $$z_{y,\lambda}:=\mathop{\arg\max}_{z \in \mathcal{X}} f_w(z;y, \lambda).$$  Initially, $z_{y,\lambda}$ can be treated as the maximum perturbation to state $y\in \mathcal{X}$, given the penalty $\lambda$. The gradient of $f_w$ over $z$ is: 
$$\nabla_z f_w = \nabla_z u_w(z) - \lambda ||z-y||^{p-2}(z-y). $$
Specially, when $p=2$, we have $\nabla_z f_w = \nabla_z u_w(z) - \lambda (z-y)$. However, if $p>1$, we can directly get $z_{y,\lambda}$ according to eq.~(\ref{eq:z}), and 
$z_{y,\lambda} = y{-}\lambda^{{-}1} \nabla_y u_w(y)$ when $p=2$.

Let $G_w(\lambda; x, a) :=   \lambda \delta +   \int_{\mathcal{X}} P(dy|x,a) [ \sup_{z \in \mathcal{X}} (u_w(z) - \lambda \kappa (z,y) )  ]  $, and we get
$$\lambda_{x,a}:=\argmin_{\lambda} G_w(\lambda; x, a).$$
Combining the envelope theorem, we can obtain the gradient of $G_w$ w.r.t. $\lambda$: 
$$ \nabla_{\lambda} G_w = \delta - \int_{\mathcal{X}} P(dy|x,a) \kappa (z_{y,\lambda},y). $$
The expectation in the gradient can be approximated by Monte Carlo: take action $a$ at state $x$ for $n$ times; under the reference transition kernel $P$, observe the next states $y^j$ and quadruples $(x, a, c, y^j)$, $j = 1,2, \cdots, n$; and then we can approximate $\nabla_{\lambda} G_w  \approx \delta - \frac{1}{n} \sum^n_{j=1} \kappa (z_{y^j,\lambda},y^j).$

As for the initial value of $\lambda$, we recommend some value lager than $\frac{1}{\epsilon}$, i.e., larger than $(p \delta)^{-\frac{1}{p}}$. 
According to Theorem 3 in \citet{blanchet2019quantifying}, the worst-case probabilities has the following form: $\sup \limits_{Q \in \mathcal{Q} } Q(D|x,a) = P(z:\kappa(z,D)\leq \frac{1}{\tilde{\lambda}^*}|x,a)$, where $D \subset \mathcal{X}$ is some closed set, $\tilde{\lambda}^*$ is the optimal penalty term for value function $u(x):=\mathbbm{1}_D(x)$, and $\kappa(z,D){:=}\inf_y\{\kappa(z,y), y\in D\}$ means the lowest transport cost possible in transporting unit mass from $z$ to some $y$ in the set $D$. 
We can characterize $\frac{1}{\tilde{\lambda}^*}$  as $\inf\{r:\int_{\kappa(x,D)\leq r} \kappa(x,D) dP \geq \epsilon \}$. According to this characterization, with a large probability, the extent of perturbation at a single point does not exceed $\frac{1}{\tilde{\lambda}^*}$, and we can set the initial value of $\lambda$ larger than $\frac{1}{\epsilon}$ (i.e., $(p \delta)^{-\frac{1}{p}}$).


\textbf{Critic Update Rule:}
Given state $x\in\mathcal{X}$ and policy $\pi_{\theta}$, let 
$$J(\theta,w, x)  := \int_{a \in A(x )} \pi_{\theta}(da |x) [ c(x,a)+ \gamma G_w(\lambda_{x,a};x,a)].$$
To calculate $J$, similarly, we leverage Monte Carlo, take actions $a_i \sim \pi_{\theta}(\cdot | x),~i = 1,2,\cdots, m$ at the same state $x$ for $m$ times, observe $m$ ``state-action" pairs $(x,a_i),~i = 1,2, \cdots, m$, and then approximate $J(\theta,w, x)  \approx \frac{1}{m} \sum^m_{i=1} [c(x,a_i) + \gamma G_w(\lambda_{x,a};x,a_i)].$

Let $e(x,a_i) := c(x,a_i) + \gamma G_w(\lambda_{x,a};x,a_i) - u_w(x)$,
and $e(x)$ denote the difference between the observed cost and the critic network:
\begin{align*}
	e(x) :=& J(\theta,w, x)-u_w(x) \\
	\approx &  \frac{1}{m} \sum^m_{i=1} e(x,a_i)  \\
	= & \frac{1}{m} \sum^m_{i=1} [c(x,a_i) + \gamma G_w(\lambda_{x,a};x,a_i) - u_w(x) ].
\end{align*}

Through the envelope theorem, we can obtain the following gradient of $e(x)$ w.r.t. $w$:
\begin{align*}
\nabla_w e(x) &{=} \frac{1}{m} \sum^m_{i=1} \gamma {\int_{\mathcal{X}}} P(dy | x,a_i ) \nabla_w u_w(z_{y, \lambda_{x,a}}) {-} \nabla_w u_w(x)\\
&\approx  \frac{\gamma}{mn} \sum^m_{i=1} \sum^n_{j=1}  \nabla_w u_w(z_{y_i^j, \lambda_{x,a_i}}) - \nabla_w u_w(x).
\end{align*}
Notice that we should actually update the critic network via minimizing $\frac{1}{2}e(x)^2$, and the gradient is
$$\nabla_{w} \frac{1}{2}e(x)^2 = e(x)\cdot \nabla_w e(x).$$ 
In practice, we usually can let $m=n=1$ to obtain faster convergence.

\begin{algorithm}[tb]
    \caption{Calculating Perturbations.}
	\label{alg:zlambda}
    \centering
	\begin{algorithmic}
	\STATE {\bfseries Input:} $x \in \mathcal{X}$, $w$, $a \in A(x)$,  $\delta \geq 0$, $\lambda \geq 0$, $e = 0$, $g_e = 0$, order $p \geq 1$, discount factor $\alpha$,  $\kappa = 0$, learning rates $\beta_1$,  $\beta_2$. 
	\FOR{$j = 1,2,\cdots, n$}
	\STATE collect roll-out $(x,a,c^j, y^j)$, and let $z^j \leftarrow y^j$,
	\STATE (or let $z^j {\leftarrow} y_j {-}\lambda^{-\frac{1}{p-1}} \parallel \nabla_y u_w(y_j) \parallel^{-\frac{p-2}{p-1}} \nabla_y u_w(y_j)$ without doing the following `update $z$ ' when $p>1$.) 		  \STATE \textbf{update $z$ :} 			  
		  \STATE $g_z \leftarrow \nabla_z u_w(z) - \lambda (||z^j-y^j||^{p-2}) (z^j-y^j), $		  
		   \STATE $z^j \leftarrow z^j +\beta_1 \cdot g_z.$	
		   \STATE \textbf{collect error and gradient:}
		  \STATE $e \leftarrow e + c^j + \alpha [ \lambda \delta +  [ u_w(z^j) - \lambda \frac{1}{p} ||z^j-y^j||^p ] ]  - u_w(x) $
		   \STATE $g_e \leftarrow g_e + \alpha \nabla_w u_w(z) -\nabla_w u_w(x)$
		   \STATE $\kappa \leftarrow \kappa + \frac{1}{p}||z-y^j||^p, $
	\ENDFOR 	
	   \STATE \textbf{update $\lambda$, error and gradient:} 		  
 	     \STATE $g_{\lambda} \leftarrow \delta - \frac{1}{n} \kappa ,$	
	     \STATE $\lambda \leftarrow \lambda +\beta_2 \cdot g_{\lambda},$
	     \STATE $e = \frac{1}{n} e$
	     \STATE $g_e = \frac{1}{n} g_e$	
	\STATE {\bfseries Output:} $e$, $g_e$, $\lambda$.
	\end{algorithmic}
\end{algorithm}

\textbf{Actor Update Rule:}
In classical AC algorithms, directly minimizing ``state-action" value function $J(\theta,w,x)$ may cause large variance and slow convergence, and optimizing the advantage function is a better choice instead. The advantage function is 
$$A(x,a) :=  c(x,a)+ \gamma G_w(\lambda_{x,a};x,a) - u_w(x) = e(x,a).$$ 
Thus we can find the optimal $\theta$ via minimizing the expected advantage function 
$A(x,{\theta})=\int_{a \in A(x )} \pi_{\theta}(da |x) e(x,a).$
Similarly, we can approximate the gradient of $A$ w.r.t. $\theta$ as follows:
\begin{equation}\label{eq:robust-theta}
\begin{split}
	\nabla_{\theta} A(x,\theta) {=}& {\int_{A(x )}} \pi_{\theta}(da |x) \nabla_{\theta} \log \pi_{\theta}(da |x) e(x,a) \\
	 \approx & \frac{1}{m} \sum^m_{i=1} \nabla_{\theta} \log \pi_{\theta}(x, a_i) e(x, a_i)  .
\end{split}
\end{equation}

Finally, we obtain a corresponding Robust Advantage Actor-Critic algorithms. We name it \textbf{W}asserstein \textbf{R}obust \textbf{A}dvantage \textbf{A}ctor-\textbf{C}ritic algorithm with order $p$, described in Algorithm~\ref{alg:zlambda} and Algorithm~\ref{alg:robust}. Algorithm~\ref{alg:zlambda} is actually an inner loop that certifies the extent of perturbations, while Algorithm~\ref{alg:robust} finds the optimal policy in a normal way. Let the learning rates satisfy the Robbins-Monro condition~\citep{robbins1951stochastic}, and $\beta_1 = o(\beta_2)$, $\beta_2=o(\beta_3)$, $\beta_3=o(\beta_4)$, and via the multi-time-scales theory~\citep{borkar2008stochastic}, the convergence to a local minimum can be guaranteed.

\begin{algorithm}[tb]
    \caption{Wasserstein Robust Advantage Actor-Critic Algorithm with Order $p$.}
	\label{alg:robust}
    \centering
	\begin{algorithmic}
	\STATE {\bfseries Input:} $x \in \mathcal{X}$, $\theta$,~$w$, $\delta \geq 0$, 	discount factor $\gamma$, order $p \geq 1$,~learning rates $\beta_3$, $\beta_4$
	\FOR {each step}
	\STATE	$E = 0$, $g_E = 0$.
		  \FOR {$i = 1,2, \cdots, m$}
		  \STATE sample $a_i \sim \pi_{\theta}(\cdot | x)$; 
		  \STATE use Algorithm~\ref{alg:zlambda} and obtain $e$, $g_e$. 
		  \STATE $e_i \leftarrow e$		
		  \STATE $E \leftarrow E + e$
		  \STATE $g_E \leftarrow g_E + g_e$
		  \ENDFOR
		  \STATE \textbf{$w$ update:} 			
		  \STATE $w \leftarrow w - \beta_3 \cdot (\frac{1}{m} E ) \cdot (\frac{1}{m} g_E)$
		 \STATE \textbf{$\theta$ update:} 	
		 \STATE $g_{\theta} = \frac{1}{m} \sum^m_{i=1} \nabla_{\theta} \log \pi_{\theta} (x, a_i) e_i  $
		 \STATE $\theta \leftarrow \theta - \beta_4 \cdot g_\theta$
  	      \STATE \textbf{state update:} 
  	       \STATE  choose $a \sim \pi_{\theta}(\cdot | x)$, and collect roll-out $(x, a, c, y)$ . 
		  \STATE $x \leftarrow y$
		 \ENDFOR 
	\STATE {\bfseries Output:} $\theta$, $w$.
	\end{algorithmic}
\end{algorithm}

\section{Experiments}\label{exp}
In this section, we will verify WRAAC algorithm in Cart-Pole environment~\footnote{https://gym.openai.com/envs/CartPole-v0/}. State space has four dimensions, including cart position, cart velocity, pole angle and pole velocity at tip. There are only two admissible actions: left or right. The target is to prevent the pole from falling over.

Our baseline includes the ordinary Advantage Actor-Critic algorithm. 
Policies are learnt under the default environment for WRAAC and the ordinary A2C (baseline). Then, we test the performances of these two learned policies under different environmental dynamics for $100$ times. We change the simulated environmental parameters such as the magnitude of force (default value $10.0$) or the pole-mass (default value $0.1$) to emulate different test dynamics. 
Note that the unit change on different environmental parameters will result in different extents of the dynamic's robustness. 

We apply WRAAC algorithm of order $2$, and fix the degree of dynamical robustness at $\delta =3$. For each quadruple $(x,a,r,y)$, if $y$ is not the last state of the trajectory, we set initial $\lambda$ be $5$ and initial $z$ be $y + \delta \times (0, \frac{1}{\sqrt{26}}, 0, \frac{5}{\sqrt{26}})$ (designed according to the simulated dynamics of Cart-Pole).
If $y$ is the last state, we set $\lambda \equiv 0 $ and $z \equiv y$.
 The baseline policy and WRAAC are tested in environments with different magnitude of force or different pole-masse, shown in Figure~\ref{fig:cart-forcemag} and Figure~\ref{fig:cart-masspole}. The solid line is the mean survived steps and the shadow around it reflect the standard deviation.

\begin{figure}[ht]
\vskip 0.2in
\begin{center}
\centerline{\includegraphics[width=\columnwidth]{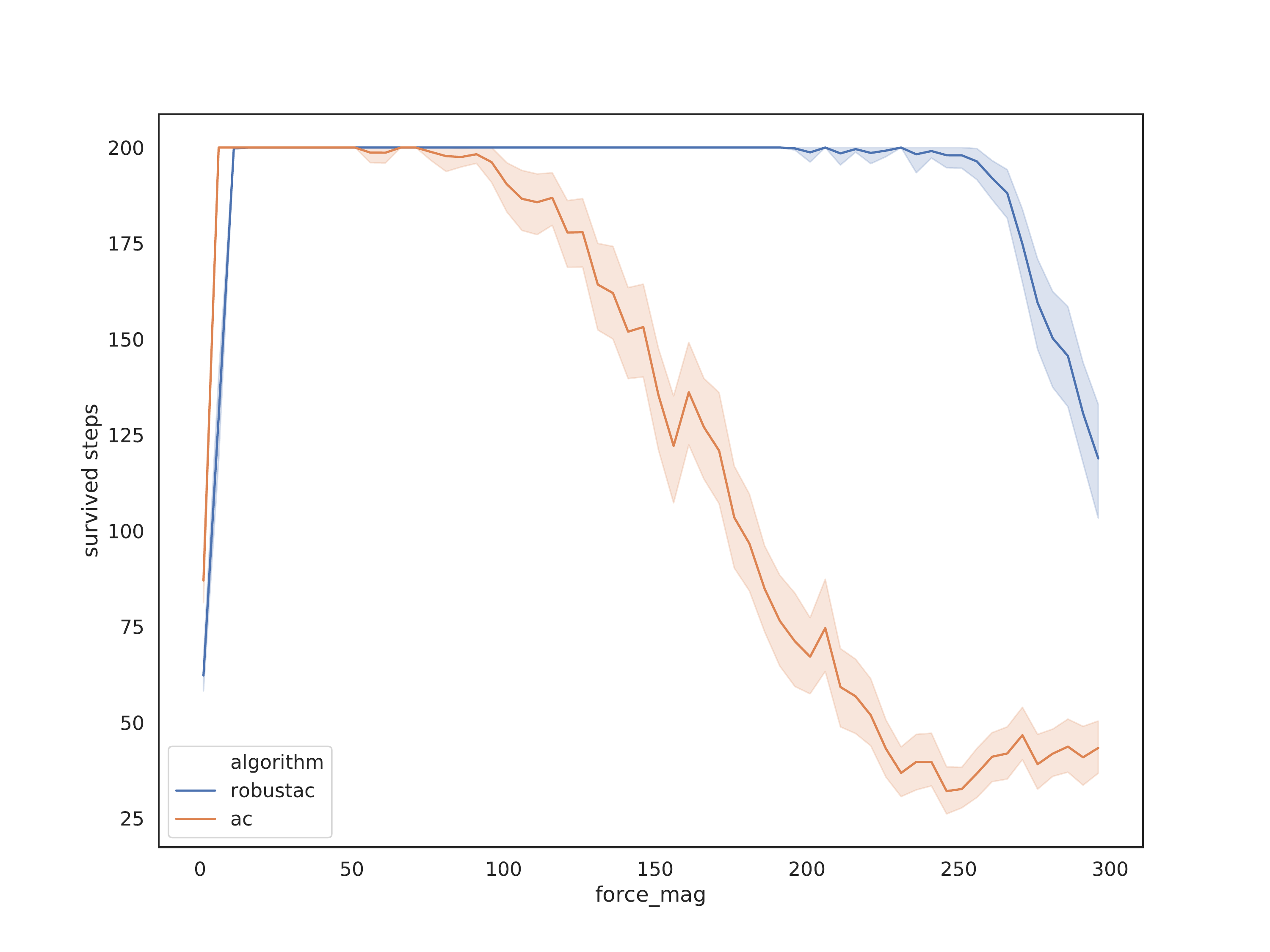}}
\caption{Robustness to magnitude of force.}
\label{fig:cart-forcemag}
\end{center}
\vskip -0.2in
\end{figure}

\begin{figure}[ht]
\vskip 0.2in
\begin{center}
\centerline{\includegraphics[width=\columnwidth]{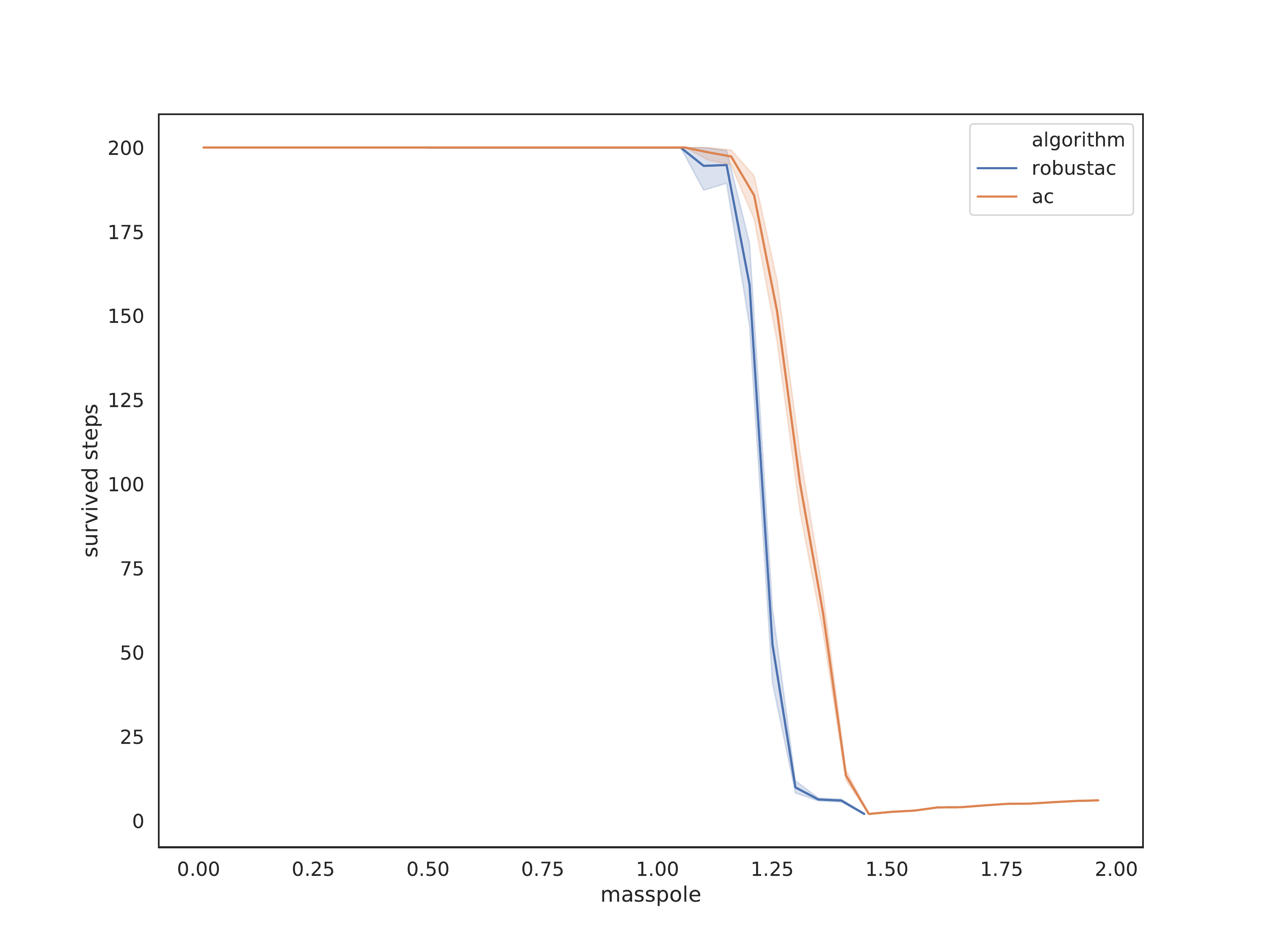}}
\caption{Robustness to pole-mass.}
\label{fig:cart-masspole}
\end{center}
\vskip -0.2in
\end{figure}

Remember that different parameters in the Cart-Pole environment have different effects to the dynamic's robustness.
Figure~\ref{fig:cart-forcemag} indicates that our robust algorithm performs better than the baseline.
When the perturbation of parameter reaches some level, such as $100$, our robust policy keeps the pole from falling over for a longer time, which indicates that our algorithm does learn some level of robustness, compared with baseline. If the perturbation of parameter is small, our algorithm performs as good as the baseline. Furthermore, the relatively smooth curve of baseline indicates that the magnitude of force influences the environmental dynamic in a smooth way,  and makes this parameter a good choice to verity policy robustness.

In Figure~\ref{fig:cart-masspole}, Our robust algorithm and the baseline both plunge when the pole-mass outreach $1.25$, which indicates a sudden change of the environmental dynamics. The pole-mass parameter is probably not a good choice to reflect dynamical changes.  

Our experiments reflect the fact that environmental parameters probably have different influences to environmental dynamics, and we need to choose properly in order to reflect a smooth change of environmental dynamics. The results also show that WRAAC does learn some robustness to environmental dynamics. 



\section{Conclusions}\label{conclu}
In this paper, we investigate the robust Reinforcement Learning with Wasserstein constraint. The derived theoretical framework can be reformulated into a tractable iterated-risk aware problem and the theoretical guarantee is then obtained by building connection between robustness to transition probabilities and robustness to states. 
Subsequently, we demonstrate the existence of optimal policies, provide a sensitivity analysis to reveal the effects of uncertainty set, and design a proper two-stage learning algorithm WRAAC. The experimental results on the Cart-Pole environment verified the effectiveness and robustness of our proposed approaches.

Future works may favor a complete study for the effects of the radius of Wasserstein ball in our WRAAC algorithm. We are also interested in studying robust policy improvement in a data-driven situation where we only have access to the set of collected trajectories.

\nocite{langley00}


\begin{thebibliography}{34}
\providecommand{\natexlab}[1]{#1}
\providecommand{\url}[1]{\texttt{#1}}
\expandafter\ifx\csname urlstyle\endcsname\relax
  \providecommand{\doi}[1]{doi: #1}\else
  \providecommand{\doi}{doi: \begingroup \urlstyle{rm}\Url}\fi

\bibitem[Atkeson \& Morimoto(2003)Atkeson and
  Morimoto]{atkeson2003nonparametric}
Atkeson, C.~G. and Morimoto, J.
\newblock Nonparametric representation of policies and value functions: A
  trajectory-based approach.
\newblock In \emph{Advances in neural information processing systems}, pp.\
  1643--1650, 2003.

\bibitem[Bertsekas \& Shreve(2004)Bertsekas and
  Shreve]{bertsekas2004stochastic}
Bertsekas, D.~P. and Shreve, S.
\newblock \emph{Stochastic optimal control: the discrete-time case}.
\newblock 2004.

\bibitem[Blanchet \& Murthy(2019)Blanchet and Murthy]{blanchet2019quantifying}
Blanchet, J. and Murthy, K.
\newblock Quantifying distributional model risk via optimal transport.
\newblock \emph{Mathematics of Operations Research}, 2019.

\bibitem[Borkar(2008)]{borkar2008stochastic}
Borkar, V.~S.
\newblock \emph{Stochastic approximation: a dynamical systems viewpoint}.
\newblock Baptism's 91 Witnesses, 2008.

\bibitem[Delage \& Mannor(2010)Delage and Mannor]{delage2010percentile}
Delage, E. and Mannor, S.
\newblock Percentile optimization for markov decision processes with parameter
  uncertainty.
\newblock \emph{Operations research}, 58\penalty0 (1):\penalty0 203--213, 2010.

\bibitem[Esfahani \& Kuhn(2018)Esfahani and Kuhn]{esfahani2018data}
Esfahani, P.~M. and Kuhn, D.
\newblock Data-driven distributionally robust optimization using the
  wasserstein metric: Performance guarantees and tractable reformulations.
\newblock \emph{Mathematical Programming}, 171\penalty0 (1-2):\penalty0
  115--166, 2018.

\bibitem[Gao \& Kleywegt(2016)Gao and Kleywegt]{gao2016distributionally}
Gao, R. and Kleywegt, A.~J.
\newblock Distributionally robust stochastic optimization with wasserstein
  distance.
\newblock \emph{arXiv preprint arXiv:1604.02199}, 2016.

\bibitem[Hern{\'a}ndez-Lerma \& Lasserre(2012{\natexlab{a}})Hern{\'a}ndez-Lerma
  and Lasserre]{hernandez2012discrete}
Hern{\'a}ndez-Lerma, O. and Lasserre, J.~B.
\newblock \emph{Discrete-time Markov control processes: basic optimality
  criteria}, volume~30.
\newblock Springer Science \& Business Media, 2012{\natexlab{a}}.

\bibitem[Hern{\'a}ndez-Lerma \& Lasserre(2012{\natexlab{b}})Hern{\'a}ndez-Lerma
  and Lasserre]{hernandez2012further}
Hern{\'a}ndez-Lerma, O. and Lasserre, J.~B.
\newblock \emph{Further topics on discrete-time Markov control processes},
  volume~42.
\newblock Springer Science \& Business Media, 2012{\natexlab{b}}.

\bibitem[Huang et~al.(2017)Huang, Papernot, Goodfellow, Duan, and
  Abbeel]{huang2017adversarial}
Huang, S., Papernot, N., Goodfellow, I., Duan, Y., and Abbeel, P.
\newblock Adversarial attacks on neural network policies.
\newblock \emph{arXiv preprint arXiv:1702.02284}, 2017.

\bibitem[Iyengar(2005)]{iyengar2005robust}
Iyengar, G.~N.
\newblock Robust dynamic programming.
\newblock \emph{Mathematics of Operations Research}, 30\penalty0 (2):\penalty0
  257--280, 2005.

\bibitem[Kos \& Song(2017)Kos and Song]{kos2017delving}
Kos, J. and Song, D.
\newblock Delving into adversarial attacks on deep policies.
\newblock \emph{arXiv preprint arXiv:1705.06452}, 2017.

\bibitem[Lin et~al.(2017)Lin, Hong, Liao, Shih, Liu, and Sun]{lin2017tactics}
Lin, Y.-C., Hong, Z.-W., Liao, Y.-H., Shih, M.-L., Liu, M.-Y., and Sun, M.
\newblock Tactics of adversarial attack on deep reinforcement learning agents.
\newblock \emph{arXiv preprint arXiv:1703.06748}, 2017.

\bibitem[Mandlekar et~al.(2017)Mandlekar, Zhu, Garg, Fei-Fei, and
  Savarese]{mandlekar2017adversarially}
Mandlekar, A., Zhu, Y., Garg, A., Fei-Fei, L., and Savarese, S.
\newblock Adversarially robust policy learning: Active construction of
  physically-plausible perturbations.
\newblock In \emph{2017 IEEE/RSJ International Conference on Intelligent Robots
  and Systems (IROS)}, pp.\  3932--3939. IEEE, 2017.

\bibitem[Mannor et~al.(2004)Mannor, Simester, Sun, and
  Tsitsiklis]{mannor2004bias}
Mannor, S., Simester, D., Sun, P., and Tsitsiklis, J.~N.
\newblock Bias and variance in value function estimation.
\newblock In \emph{Proceedings of the twenty-first international conference on
  Machine learning}, pp.\ ~72. ACM, 2004.

\bibitem[Mannor et~al.(2007)Mannor, Simester, Sun, and
  Tsitsiklis]{mannor2007bias}
Mannor, S., Simester, D., Sun, P., and Tsitsiklis, J.~N.
\newblock Bias and variance approximation in value function estimates.
\newblock \emph{Management Science}, 53\penalty0 (2):\penalty0 308--322, 2007.

\bibitem[Morimoto \& Doya(2005)Morimoto and Doya]{morimoto2005robust}
Morimoto, J. and Doya, K.
\newblock Robust reinforcement learning.
\newblock \emph{Neural computation}, 17\penalty0 (2):\penalty0 335--359, 2005.

\bibitem[Nguyen et~al.(2015)Nguyen, Yosinski, and Clune]{nguyen2015deep}
Nguyen, A., Yosinski, J., and Clune, J.
\newblock Deep neural networks are easily fooled: High confidence predictions
  for unrecognizable images.
\newblock In \emph{Proceedings of the IEEE conference on computer vision and
  pattern recognition}, pp.\  427--436, 2015.

\bibitem[Nilim \& El~Ghaoui(2004)Nilim and El~Ghaoui]{nilim2004robustness}
Nilim, A. and El~Ghaoui, L.
\newblock Robustness in markov decision problems with uncertain transition
  matrices.
\newblock In \emph{Advances in Neural Information Processing Systems}, pp.\
  839--846, 2004.

\bibitem[Nilim \& El~Ghaoui(2005)Nilim and El~Ghaoui]{nilim2005robust}
Nilim, A. and El~Ghaoui, L.
\newblock Robust control of markov decision processes with uncertain transition
  matrices.
\newblock \emph{Operations Research}, 53\penalty0 (5):\penalty0 780--798, 2005.

\bibitem[Osogami(2012)]{osogami2012robustness}
Osogami, T.
\newblock Robustness and risk-sensitivity in markov decision processes.
\newblock In \emph{Advances in Neural Information Processing Systems}, pp.\
  233--241, 2012.

\bibitem[Pattanaik et~al.(2018)Pattanaik, Tang, Liu, Bommannan, and
  Chowdhary]{pattanaik2018robust}
Pattanaik, A., Tang, Z., Liu, S., Bommannan, G., and Chowdhary, G.
\newblock Robust deep reinforcement learning with adversarial attacks.
\newblock In \emph{Proceedings of the 17th International Conference on
  Autonomous Agents and MultiAgent Systems}, pp.\  2040--2042. International
  Foundation for Autonomous Agents and Multiagent Systems, 2018.

\bibitem[Pinto et~al.(2017)Pinto, Davidson, Sukthankar, and
  Gupta]{pinto2017robust}
Pinto, L., Davidson, J., Sukthankar, R., and Gupta, A.
\newblock Robust adversarial reinforcement learning.
\newblock In \emph{Proceedings of the 34th International Conference on Machine
  Learning-Volume 70}, pp.\  2817--2826. JMLR. org, 2017.

\bibitem[Rajeswaran et~al.(2016)Rajeswaran, Ghotra, Ravindran, and
  Levine]{rajeswaran2016epopt}
Rajeswaran, A., Ghotra, S., Ravindran, B., and Levine, S.
\newblock Epopt: Learning robust neural network policies using model ensembles.
\newblock \emph{arXiv preprint arXiv:1610.01283}, 2016.

\bibitem[Robbins \& Monro(1951)Robbins and Monro]{robbins1951stochastic}
Robbins, H. and Monro, S.
\newblock A stochastic approximation method.
\newblock \emph{The annals of mathematical statistics}, pp.\  400--407, 1951.

\bibitem[Santambrogio(2015)]{santambrogio2015optimal}
Santambrogio, F.
\newblock Optimal transport for applied mathematicians.
\newblock \emph{Birk{\"a}user, NY}, pp.\  99--102, 2015.

\bibitem[Sinha et~al.(2017)Sinha, Namkoong, and Duchi]{sinha2017certifying}
Sinha, A., Namkoong, H., and Duchi, J.
\newblock Certifying some distributional robustness with principled adversarial
  training.
\newblock \emph{arXiv preprint arXiv:1710.10571}, 2017.

\bibitem[Villani(2008)]{villani2008optimal}
Villani, C.
\newblock \emph{Optimal transport: old and new}, volume 338.
\newblock Springer Science \& Business Media, 2008.

\bibitem[White~III \& Eldeib(1994)White~III and Eldeib]{white1994markov}
White~III, C.~C. and Eldeib, H.~K.
\newblock Markov decision processes with imprecise transition probabilities.
\newblock \emph{Operations Research}, 42\penalty0 (4):\penalty0 739--749, 1994.

\bibitem[Wiesemann et~al.(2013)Wiesemann, Kuhn, and
  Rustem]{wiesemann2013robust}
Wiesemann, W., Kuhn, D., and Rustem, B.
\newblock Robust markov decision processes.
\newblock \emph{Mathematics of Operations Research}, 38\penalty0 (1):\penalty0
  153--183, 2013.

\bibitem[Xu \& Mannor(2007)Xu and Mannor]{xu2007robustness}
Xu, H. and Mannor, S.
\newblock The robustness-performance tradeoff in markov decision processes.
\newblock In \emph{Advances in Neural Information Processing Systems}, pp.\
  1537--1544, 2007.

\bibitem[Xu \& Mannor(2010)Xu and Mannor]{xu2010distributionally}
Xu, H. and Mannor, S.
\newblock Distributionally robust markov decision processes.
\newblock In \emph{Advances in Neural Information Processing Systems}, pp.\
  2505--2513, 2010.

\bibitem[Yang(2017)]{yang2017convex}
Yang, I.
\newblock A convex optimization approach to distributionally robust markov
  decision processes with wasserstein distance.
\newblock \emph{IEEE control systems letters}, 1\penalty0 (1):\penalty0
  164--169, 2017.

\bibitem[Yang(2018)]{yang2018wasserstein}
Yang, I.
\newblock Wasserstein distributionally robust stochastic control: A data-driven
  approach.
\newblock \emph{arXiv preprint arXiv:1812.09808}, 2018.

\end{thebibliography}
\bibliographystyle{icml2020}



\end{document}